\DeclareMathOperator*{\argmin}{arg\,min}
\DeclareMathOperator*{\argmax}{arg\,max}
\title{\LARGE \bf
	Uncertainty with UAV Search of Multiple Goal-oriented Targets
}
\author{Mor Sinay \and Noa Agmon \and Oleg Maksimov \and Aviad Fux \and Sarit Kraus}
\institute{
    M.Sinay \at
	Department of Computer Science, Bar-Ilan University, Israel\\
	\email{mor.sinay@gmail.com}
	\and
	N.Agmon \at
	Department of Computer Science, Bar-Ilan University, Israel\\
	\email{agmon@cs.biu.ac.il}
	\and
	S.Kraus \at
	Department of Computer Science, Bar-Ilan University, Israel\\
	\email{sarit@cs.biu.ac.il}
}
\date{Received: date / Accepted: date}
\begin{document}
	\maketitle

	\begin{abstract}
	
		This paper considers the complex problem of a team of UAVs searching targets under uncertainty. The goal of the UAV team is to find all of the moving targets as quickly as possible before they arrive at their selected goal. The uncertainty considered is threefold: First, the UAVs do not know the targets' locations and destinations. Second, the sensing capabilities of the UAVs are not perfect. Third, the targets' movement model is unknown. We suggest a real-time algorithmic framework for the UAVs, combining entropy and stochastic-temporal belief, that aims at optimizing the probability of a quick and successful detection of all of the targets. We have empirically evaluated the algorithmic framework, and have shown its efficiency and significant performance improvement compared to other solutions. Furthermore, we have evaluated our framework using Peer Designed Agents (PDAs), which are computer agents that simulate targets and show that our algorithmic framework outperforms other solutions in this scenario.
	\end{abstract}
	\keywords{Multi Agent System \and Multi Robot Search}

	\section{Introduction}\label{Introduction}
    Searching for targets that wish to arrive at a specific goal is an important practical problem that can benefit from the deployment of Unmanned Aerial Vehicles (UAVs).
    Algorithms that identify strategies for the UAVs need to take into consideration the targets' diverse capabilities, movement models, and topological environments where the targets are operating.
    Finding these targets is time-critical; hence, real-time algorithms are necessary.
     
    A simpler version of this challenging problem has been studied in previous work as Minimum Time Search (MTS) \cite{lanillos2012minimum}. In the MTS problem, one or more targets are in unknown locations and need to be found as quickly as possible. Note that in the MTS problem found in the literature \cite{perez2017multi}, the UAVs have perfect detection probability, and the targets are not moving toward a specific goal.
    {\textbf{In this paper}}, we study a complex, realistic model involving real-time with a UAV team facing multiple goal-oriented targets in an unknown location where the targets' movement model is unknown (we know only their initial location). The UAVs have imperfect bounded sensing capabilities traveling in two different environment models: a grid for the UAVs and a graph modeling the roads traversed by the targets.
    
    Initial results of a less complicated problem were presented in previous work \cite{sinay2018uav}. In this paper, we introduce the uncertainty with UAV search Of MultiplE Goal-oriented tArgets (OMEGA). In  OMEGA, the UAVs' goal is to search the targets. The targets are grounded, and their objective is to reach one of the goals.  In OMEGA, multiple targets can enter from various locations. Each target can have a different movement model. A target chooses its goal at random, and this chosen goal is unknown to the UAVs. The target travels in the environment according to a (probabilistic) noisy movement model. In this paper, we present an algorithm to handle two problems: First, when the target movement model strategy is {\em known}. Second, when it is {\em unknown}. The UAVs aim to detect all targets before reaching their goals.
    The uncertainty in our model is threefold: First, the UAVs have imperfect detection capabilities (it can detect a target with some probability $\leq$ 1). Second, the targets' goal selection is probabilistic. Third, the targets' movement model towards their goal is stochastic.
    Our objective is to maximize the probability of detecting all targets before reaching their goals.
    
    OMEGA is a dynamic problem with high dimensionality since the world representation of a grid for the UAVs and a graph for the targets is enormous, in addition to multiple UAVs and targets.
    Hence, the online computation of an optimal solution for the UAVs is intractable for this multi-player problem. Note that the solution is intractable even when only one searcher and one target are involved \cite{trummel1986complexity}.
    
    {\textbf{The main contribution}} of this paper: Providing efficient algorithms for solving realistic OMEGA and MTS problems which include:
    1) A general graph, not a grid, which can cope with environmental constraints such as a no flying zone \cite{perez2017multi};
    2) Goal-oriented targets;
    3) Heterogeneous targets and UAVs;
    4) Multiple targets with different initial locations;
    5) Diverse movement strategies (known and unknown strategies); and 
    6) UAVs with imperfect detection capabilities.
    
    We built a realistic border defense simulation demonstrating its high success rate in the detection of the targets. We empirically evaluated our algorithm with extensive simulations and present its efficiency for targets with {\em known} strategy and {\em unknown} strategy using PDAs.
	
	\section{Related Work}\label{sec:relatedWork}
	Searching for targets when there is incomplete information about their location is a generalization of the MTS problem, where one or more targets are in unknown locations and need to be found as soon as possible.
	There are many approaches for handling the MTS problem under the assumption of a static target \cite{perez2016real,gan2011multi,lanillos2014multi,yang2002decentralized}; however, we are interested in moving targets. 
	
	Lanillos et al. \cite{lanillos2012minimum} address the MTS problem using a Partially Observable Markov Decision Process (POMDP) \cite{monahan1982state} formulation with a single target and a single defender. The target has a Markovian movement model and is not affected by the defender's location, but the target and the defender have the same velocity, and the defender has perfect detection, meaning that if the target is in the defender's detection range, the latter will detect the former. 
	Ru et al. \cite{ru2015distributed} present an algorithm for the MTS problem under the assumptions that the UAVs have uncertainty regarding their location and have restrictions over their movement capability (they can move by 45 degrees at each step).
	In their solution, the environment is represented as a grid with the assumption that each target is located in a different cell of the grid at each time step.
	Perez-Carabaza et al. \cite{perez2017multi} presented a variant of the MTS problem where the defenders have to avoid collisions and added a constraint of places containing {\em no flying zones}. They represented the environment as a grid and presented a heuristic approach for finding a single target using multiple UAVs when the target's movement model is given as a Markovian model.
	Perez-Carabaza et al. \cite{perez2018ant} presented a heuristic based ant colony optimization \cite{dorigo1996ant} for the MTS problem for multiple defenders and a single target. The environment is formed as a grid representation, and the target model is given in advance. However, the UAVs and the target have the same velocity and the UAVs have perfect detection probability, therefore their proposed solution is not applicable for OMEGA.
	
	Since the OMEGA problem can be modeled as a POMDP, we considered the Monte Carlo Tree Search (MCTS) approach for a single initial location. MCTS \cite{coulom2006efficient} is a simulation-based search algorithm for finding optimal strategies. The Information set MCTS (ISMCTS) \cite{cowling2012information} is an extension of MCTS to Imperfect Information games. ISMCTS variations have shown great success in imperfect information games such as poker and hedge. In poker, for example, there is uncertainty about what cards the opponent may have because the cards are dealt face down, in addition to the uncertainty about the opponent’s strategy. In OMEGA, the uncertainty lies in the target movement and its path toward its selected goal, in addition to the UAVs' imperfect detection capabilities. Smooth-UCT \cite{heinrich2015smooth}, an online MCTS algorithm, does not guarantee convergence to Nash-Equilibrium, however it does converge to a sub-optimal strategy much faster than other variations that offer such guarantees (e.g. Online Outcome Sampling \cite{lisy2015online}). For this reason, we implemented the Smooth-UCT in our simulations. In computer poker, the search space is $10^{18}$ \cite{billings2003approximating}, and at least $10$ million simulations are used \cite{heinrich2015smooth} in our settings.
    In our settings, the search space is approximate $10^{92}\approx 480^{35}$ (where 480 is the number of states, and 35 is the average number of steps in each simulation). This suggests that it is more plausible to use a heuristic-based approach to the UAV deployment problem.
	
	We use the entropy measurement \cite{stachniss2005information}, a standard measurement for estimating uncertainty; it is commonly used for problems with incomplete information \cite{vallve2015active,kaufman2016autonomous,blanco2006entropy}. For example, Kaufman et al. \cite{kaufman2016autonomous} introduced an algorithm to explore a grid map using robots where the probability of each cell being either occupied or free was considered. The robot chooses the trajectory that maximizes the map information gain. Blanco et al. \cite{blanco2006entropy} presented an entropy-based algorithm for the robot localization problem. They proposed an approach to measure the certainty of a robot's location based on its previous estimated location.
	
	\section {OMEGA Formulation}
	
	We consider a problem where a team of $m$ UAVs $U = \{u_i\}_{i=1}^{m}$ attempts to detect a set of $n$ targets $A = \{a_i\}_{i=1}^{n}$. The environment is modeled as a directed graph $G = (V,E)$ nested in a 2D environment, representing the roads on which the targets can move, as shown in Fig. \ref{fig:nastedGraph}(a). Each target's objective is to reach a specific goal chosen in advance, where this goal is unknown to the UAVs. Each goal $O_i$ is a subset of the graph $G$. For example, in Fig. \ref{fig:nastedGraph}(b) $O_i$ includes the edges inside the green circle. The entire goal set is denoted by $O=\bigcup\limits_{i=1}^{\bar{k}}O_i$. The UAVs try to protect this set of goals and detect all targets before reaching their goals. The UAVs are familiar with the set $O$ but do not know the specific goal(s) chosen by the targets. 
	Denote the location of a UAV $u_i \in U$ and target $a_i \in A$ at time $t \geq 0$ by $l^t_{u_i} \in E$ and $l^t_{a_i} \in E$, respectively.

    \begin{figure*}[h]
		\begin{subfigure}[h]{0.5\textwidth}
			\centering \includegraphics[clip,width=1.2\columnwidth,keepaspectratio, height=3cm]{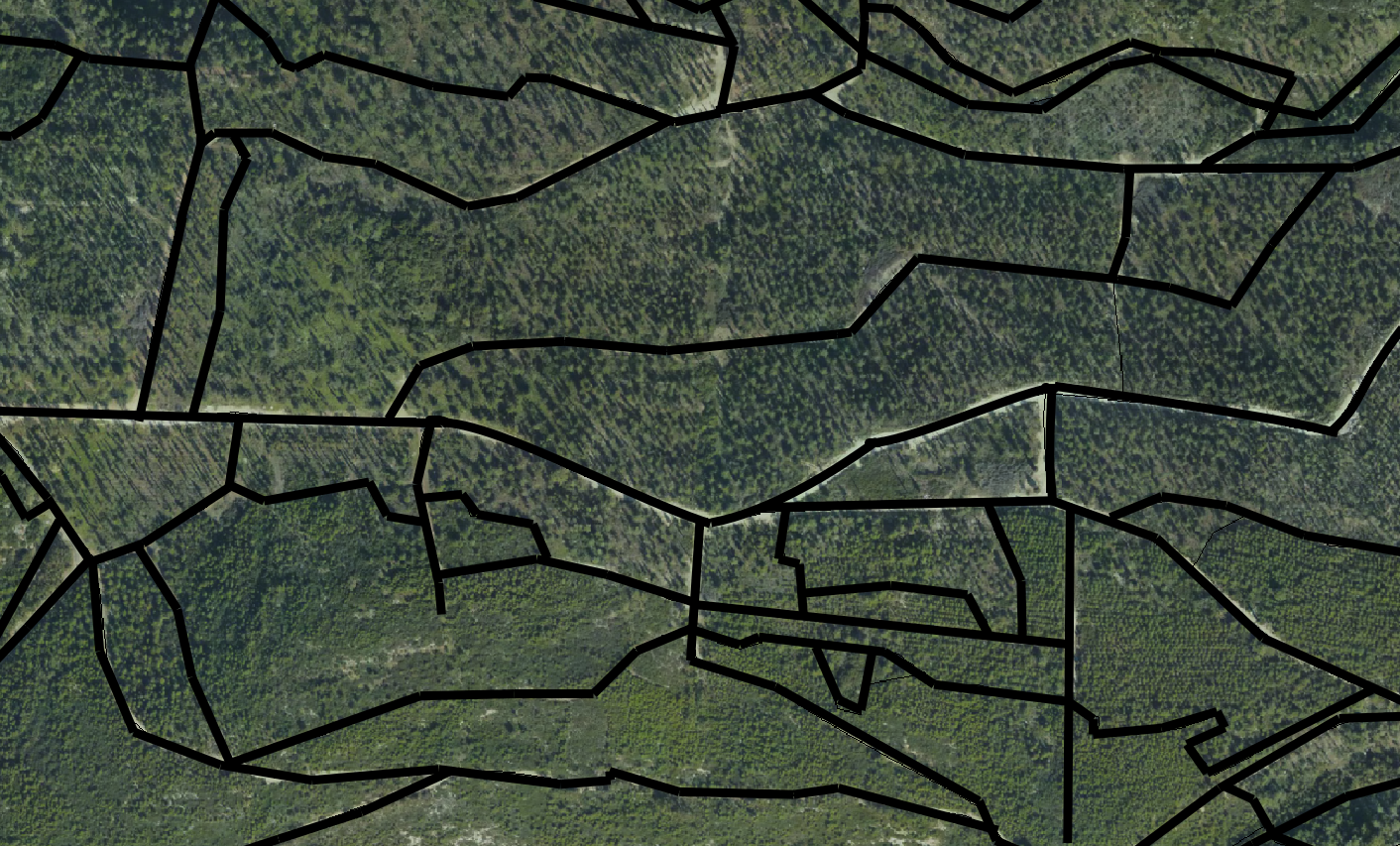}
		    \caption{}
		\end{subfigure}
		\begin{subfigure}[h]{0.5\textwidth}
			\centering
			\includegraphics[clip,width=1.2\columnwidth,keepaspectratio, height=3cm]
			{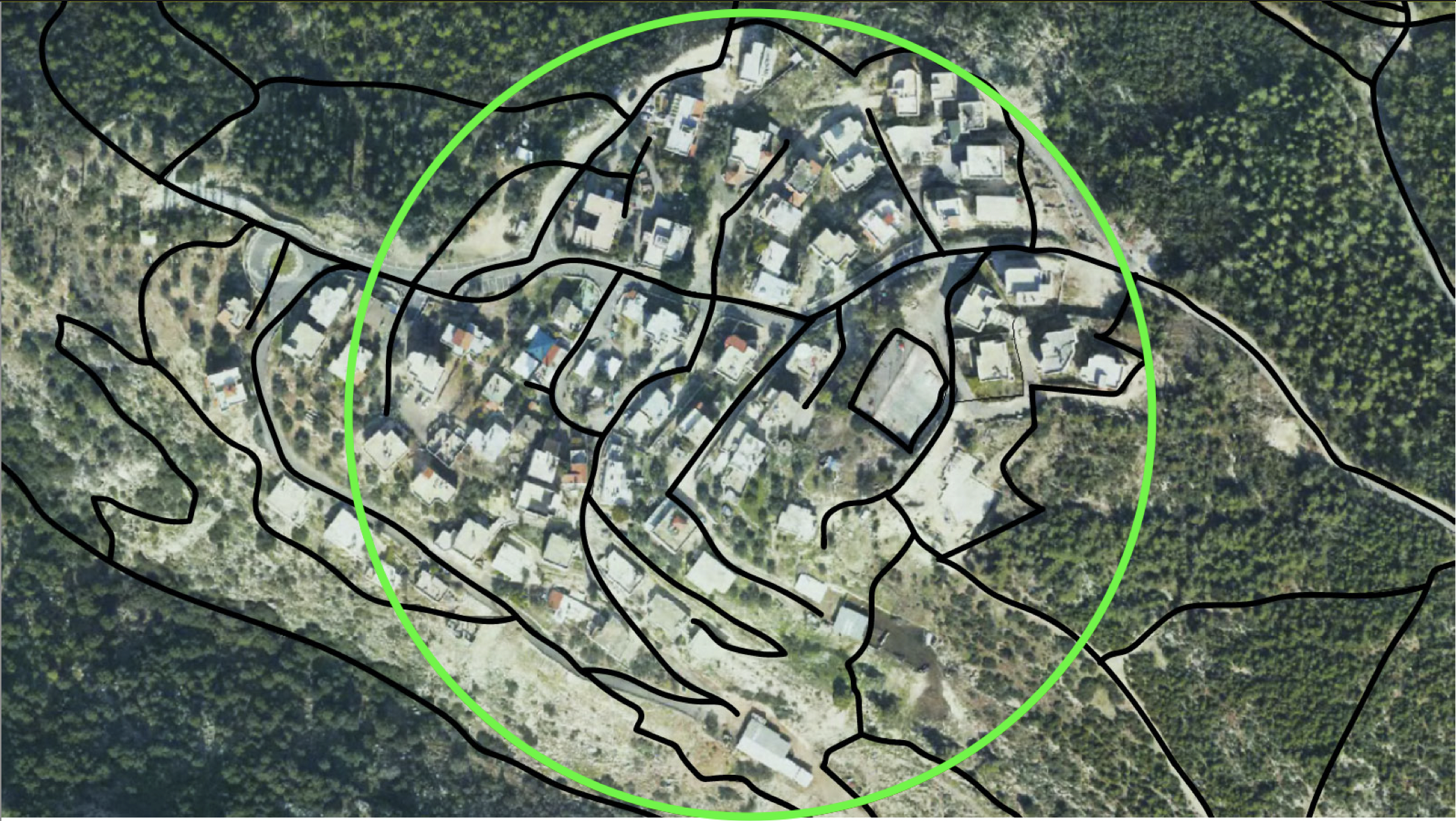}
		\caption{}
		\end{subfigure}
		\caption{Zoom-in view of the graph nested in a 2D environment and a goal example.}\label{fig:nastedGraph}
	\end{figure*}

	\subsection{UAVs' Model}
	We assume the UAVs have limited visibility, hence we denote the detection range (radius) of a UAV $u_i$ as $r_i$, illustrated in Fig. \ref{fig:bird_eye}. Denote the UAVs' minimum detection range by $r = \min\limits_{1\leq i \leq m} \{r_i\}$. We say that a UAV $u_i$ can detect a target if it is within its detection range $r_i$. 
    The UAVs have imperfect detection probability. That is, if a target is within the detection range of a UAV $u_i$, it will detect it with a probability $0 < p_i \leq 1$. Denote the minimum probability of detection by $p = \min\limits_{1\leq i \leq m}p_i$.
	
	\begin{figure}[h]
	\includegraphics[clip,width=1.6\columnwidth,keepaspectratio, height=5cm]{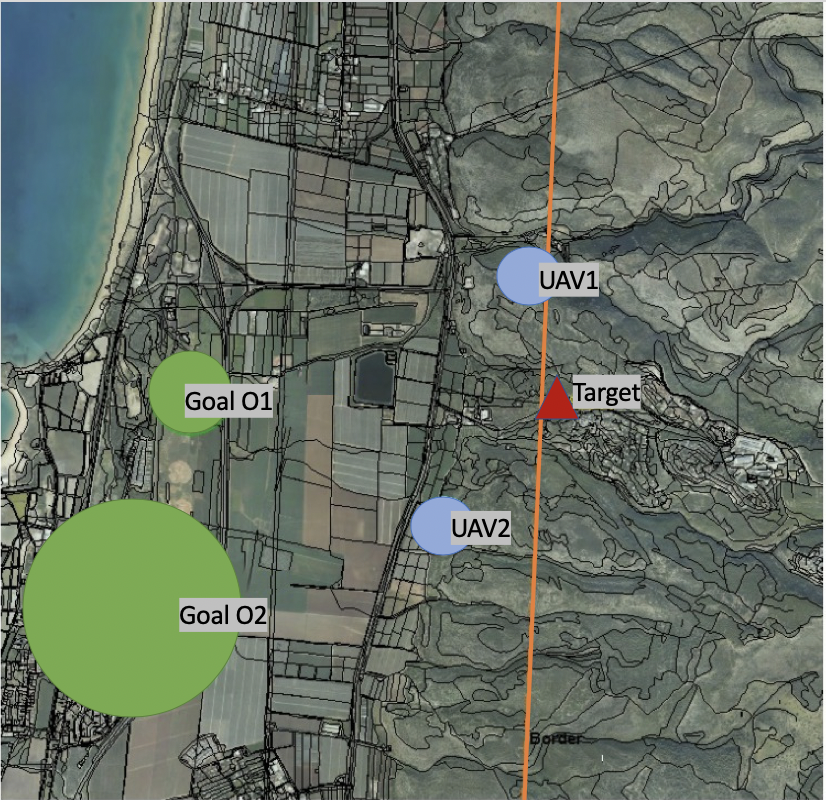}
	{\centering
		\caption{The environment overview from the bird's eye. Circled in green are the goals $O_i \subset G$, circled in blue are the UAVs' detection range. The targets' initial location is the orange line.}\label{fig:bird_eye}}
	\end{figure}

	\subsection{Targets' Model}\label{sub:targetsModel}
	We assume that each target can enter the graph from a set of predefined arbitrary locations denoted by $L$, as demonstrated in Fig. \ref{fig:bird_eye}. Denote the initial location of target $a_i$ by $l^0_{a_i} \in L$. Each target selects its path to its chosen goal $O_i$ in advance. 
	The targets' movement model is unknown. First, we assume that the UAVs know the targets' {\em strategy} (for example, the shortest path). Using the targets' strategy, we estimate a probabilistic belief of the targets' movement model by applying the target strategy from all possible initial locations to all possible goal locations in an offline process. We can only estimate a low accurate movement model from the targets' strategy. Second, we consider the case where the target strategy is unknown. 
	
	\subsection{Problem Formulation}
	Given a team of UAVs and targets as defined above, the problem starts at time $t=0$ when the targets are located at $\{l^0_{a_i}\}_{i=1}^{n} \in L$. All targets move simultaneously according to their movement capabilities and their chosen path. 
    We say that a UAV $u_i$ can detect a target $a_j$ with probability $p_i$, if and only if $\exists t$ such that $l^t_{a_j}$ is within the detection range of $u_i$ which is located at $l^t_{u_i}$ and $l^t_{a_j}\notin O$. Note that after a UAV detects a target, this UAV can now track the target until the target is found by a ground vehicle as discussed in \cite{sinay2018uav}. If the UAV's task is detection, similar to \cite{lanillos2012minimum}, this UAV can continue its flight and be used to detect the remaining targets.
    
    If $\exists t$ such that target $a_j$ arrives at one of the goals $l^t_{a_j}\in O$, then the UAVs lose. Hence, the UAVs' goal is to detect all targets before arriving at their goals.
    Our objective is to determine, for each UAV $u_i \in U$ at each time $t$, a destination for $t+1$ that will maximize the probability that the team of UAVs will detect all targets.
	
	\section{Solving OMEGA}\label{sec:solution}
	
	
	We define the term {\em incoming} set of edges $(v_i, v_{i'})=e_i\in E$ as the set of all edges $\{(v_k, v_i)=e_k \in E\}$ illustrated in Fig. \ref{fig:incoming}.  
	For each target $a_k$ and edge $e_i\in E$, $\bar{M}_k(e_j) = \{(e_j^i,p_j^i)\}$ defines the probability distribution of the incoming edges, that is, the probability $p_j^i$ that target $a_k$ is {\em coming} from edge $e_j^i$ and will choose to {\em move} to edges $\{e_j\}$. Our algorithm uses a general model and applies different movement models to the different targets as presented in Sec. \ref{sec:exp}. 
	
	\begin{figure}[h]
		\centering \includegraphics[clip,width=0.95\columnwidth,keepaspectratio]
		{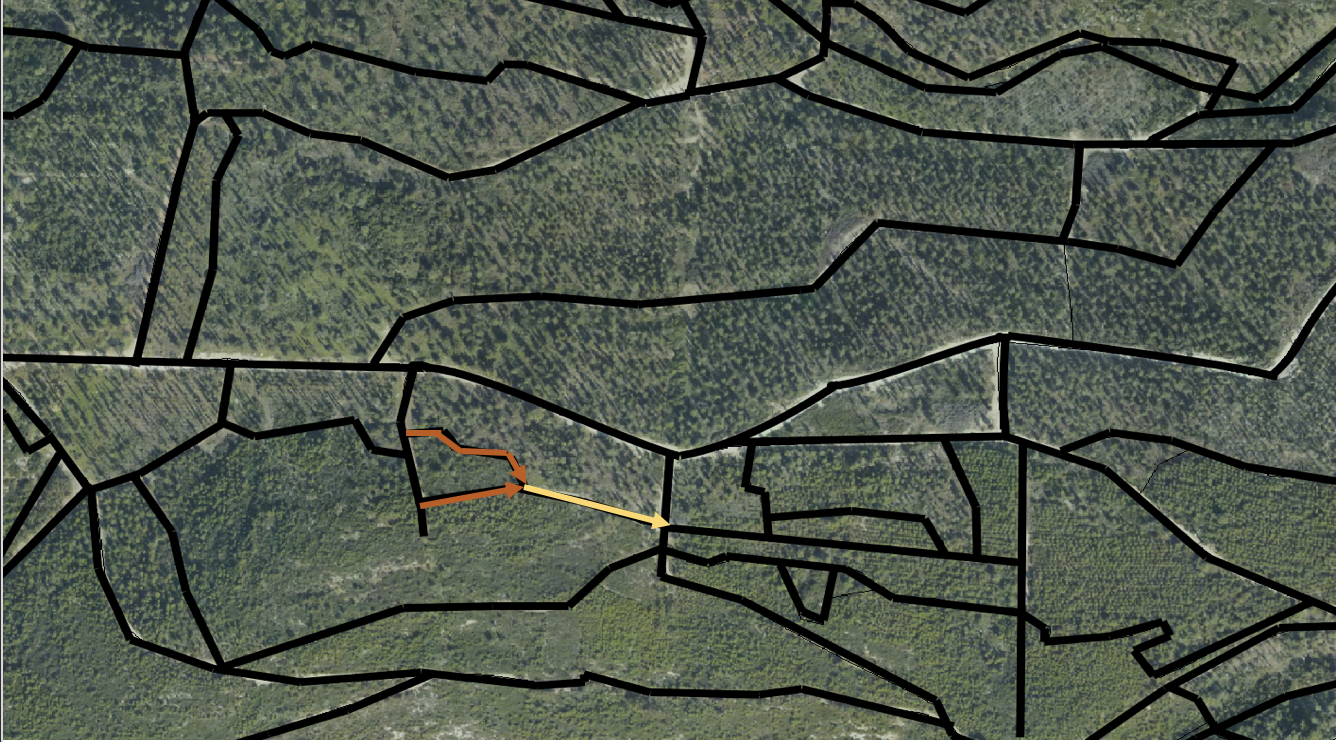}
		\caption{The orange edges are the incoming set of the yellow edge. }\label{fig:incoming}
	\end{figure}
	
	
	The UAVs' task is to detect the targets. However, each target could start from a different location. Hence, we create for each target $a_j \in A$ a probabilistic belief model $P^j(e_i,t)$. That is, our belief as to the probability that a target $a_j \in A$ is located at edge $e_i \in E$ at time $t$, and $\forall t$ $\sum\limits_{e_i \in E} P^j(e_i,t)=1$.
	Note that $P^j(l^0_{a_j},0) = 1$ and $\forall e_i \in E, e_i \neq l^0_{a_j}$ $P^j(e_i,0) = 0$, as $l_{a_j}^0$ is the initial location of the target $a_j$ which is known. The transition from $P^j(e_i,t)$ to $P^j(e_i,t+1)$ is described in Eq. \ref{eq:delta_pt} and depends on our assumption of the targets' movement model $\bar{M}_j(e_i)$. 
	
	\begin{equation}
	\label{eq:delta_pt}
	P^j(e_i,t+1) = \sum\limits_{(e_i^k,p_i^k)\in \bar{M}_j(e_i)} P^j(e_i^k,t)\times p_i^k
	\end{equation}
	
	The UAV is not limited to traveling along the graph. Thus, when a UAV is located on an edge $e$, it will potentially gain information from all edges in $G$ that are within its detection range.
	Therefore, we create a grid representation $C = \{c_i\}_{i=1}^{N}$ layered on top of the graph (see Fig. \ref{fig:cellsAndRoads}). Each cell $c_i$ in the grid is of size $\sqrt{2}r \times \sqrt{2}r$, that is the maximum square that is contained in a circle with radius $r$ (the minimum detection range capability of the UAVs). 
	
	\begin{figure}[h]
		\begin{subfigure}[h]{0.5\textwidth}
			\centering \includegraphics[clip,width=0.95\columnwidth,keepaspectratio]
			{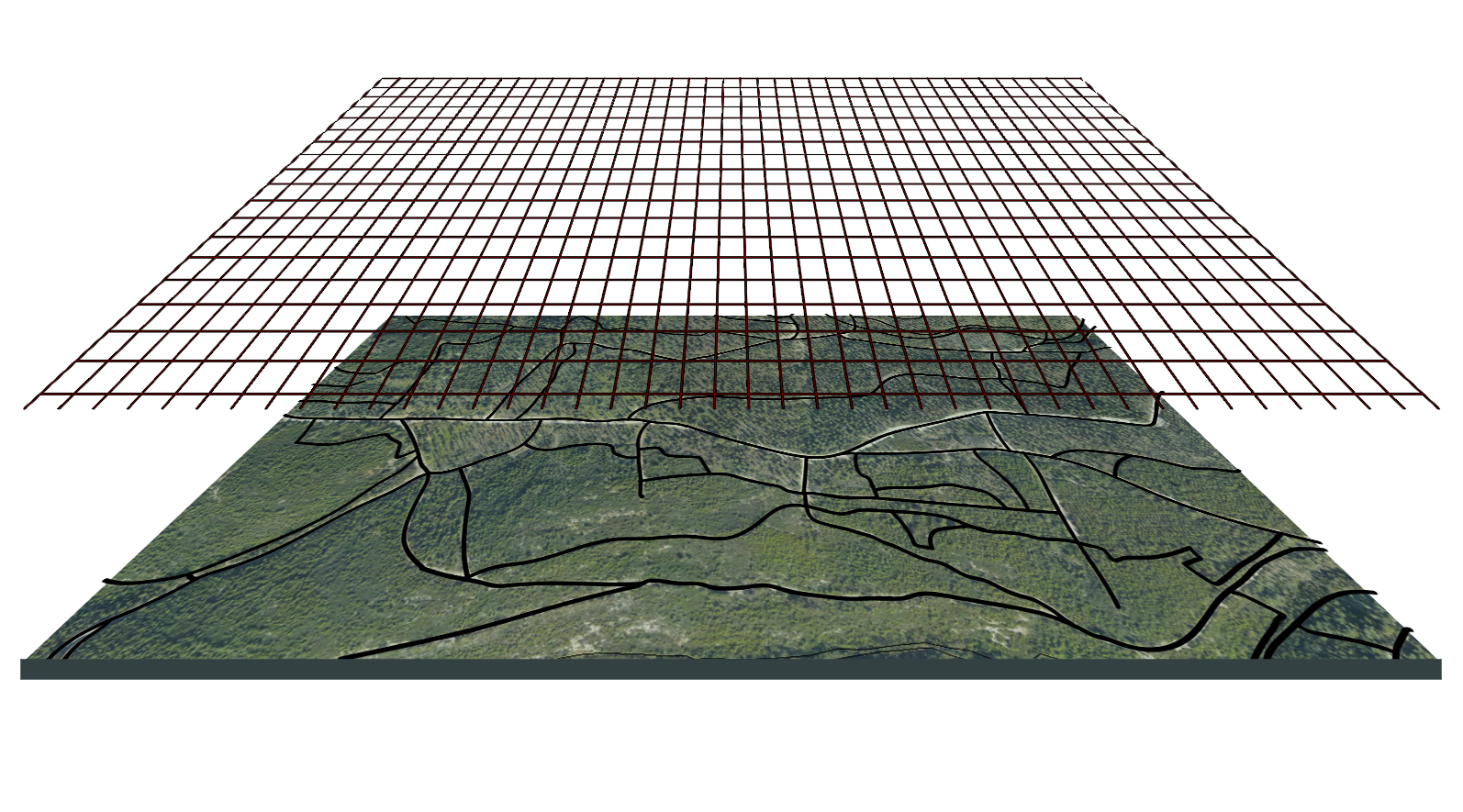}
		\end{subfigure}
		\begin{subfigure}[h]{0.5\textwidth}
			\centering
			\includegraphics[clip,width=0.95\columnwidth,keepaspectratio]
			{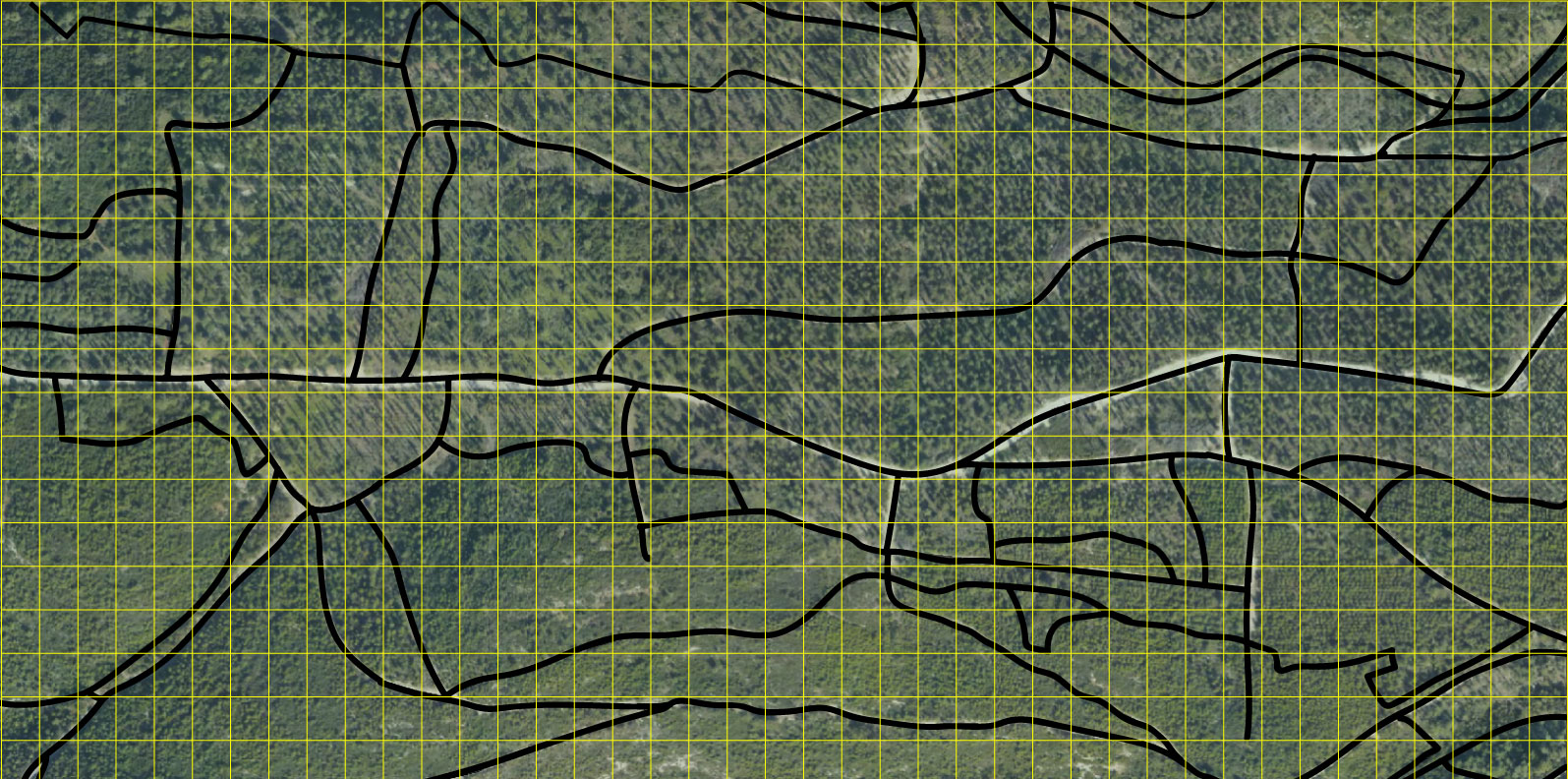}
		\end{subfigure}
		\caption[tree]{A grid layered on top of the graph.}\label{fig:cellsAndRoads}
	\end{figure}

	After constructing the grid and layering it on top of the graph, we will cut the roads (edges of G) into multiple edges (and vertices) so that each edge will be in a single cell.
	
	This offline process produces a new graph $\bar{G} = (\bar{V},\bar{E})$ paired to the grid representation $C = \{c_i\}_{i=1}^N$. 
	We denote by $P^j(c_i,t)$ our belief as to the possibility that a target $a_j$ is located in {\em cell} $c_i \in C$ at time $t$ where $P^j(c_i,t) = \sum\limits_{\bar{e}_k \in c_i} P^j(\bar{e}_k,t)$.
	
	The UAVs' goal is to detect the targets before they arrive at their goals. The UAVs can reduce the uncertainty on the targets' location during the search.
	A search policy for a UAV should specify to which cell to move in order to perform detection of a target.
	We first considered the MCTS approach for a single initial location since the OMEGA problem can be modeled as a POMDP. Given the MCTS's failure to solve the OMEGA problem (see Sec. \ref{sec:relatedWork}), we propose an entropy-based heuristic approach and show its success. 
	
	\subsection{Entropy-Based Approach}\label{subsec:searchTask}
	We first consider the entropy measurement to greedy allocate cells for the UAVs in this task. The entropy measure is based on our belief model, and is defined for each target $a_j$ by:
	\begin{equation}
	\label{eq:ent}
	E^j(C,t) = -\sum\limits_{c_i\in C} P^j(c_i,t)log(P^j(c_i,t))
	\end{equation}
	
	Assume that a UAV is located in cell $c_i$ at time $t$. If the UAV detects a target, then the UAV can continue searching for other targets. However, if the UAV did not detect a target this is not necessarily because the target is not located in cell $c_i$. The probability that a target $a_j$ is at cell $c_i$ and the UAV did not detect it is $1-p_j$. Therefore, if a UAV did not find a target $a_j$ at cell $c_i$ at time $t$, we will update the belief model for each $e_i \in c_i$ according to the probability ($P^j(e_i,t) = (1-p_j)P^j(e_i,t)$). The delta between these two beliefs would split relatively between all cells, similar to a POMDP update. 
	
	The entropy of our belief at time $t$ for target $a_j$ was defined in Eq. \ref{eq:ent}. The temporal entropy $\bar{E}^j(\{c_i\}_{i=1}^m,t)$ returns the new entropy after sending the UAVs to $m$ cells under the assumption that the UAVs did not detect the target $a_j$ (shown in Alg. \ref{alg:newP}).  
	
	\begin{algorithm} [!h]
		\caption{Entropy Update: 
		$\bar{E}^j(\{c_i\}_{i=1}^m,t)$}\label{alg:newP}
		\begin{algorithmic}
			\STATE $\eta = 1 - \sum\limits_{\{c_i\}_{i=1}^m} P^j(c_i,t)\times p_j$
			
			\FORALL {$c_k \in C$}
			\IF {$c_k \in \{c_i\}_{i=1}^m$}
			\STATE $P^j_{temp}(c_k,t) = \frac{P^j(c_k,t)\times (1-p_j)}{\eta}$
			\ELSE
			\STATE $P^j_{temp}(c_k,t) = \frac{P^j(c_k,t)}{\eta}$
			\ENDIF
			\ENDFOR
			
			\RETURN $-\sum\limits_{c_k\in C} P^j_{temp}(c_k,t)log_2(P^j_{temp}(c_k,t))$
		\end{algorithmic}
	\end{algorithm}
	
	Therefore, we define the entropy gain for target $a_j$ as $G^j({\{c_i\}_{i=1}^m},t)$ as the sum of entropy differences. The entropy difference when a UAV detects the target is $p_j\times P^j(c_i,t)(E^j(C,t)-0)$ (because the new entropy is zero). Added to the entropy difference is the probability that the target is in this cell but it did not detect it, or the probability that the target is not in this cell, that is $(1-p_j\times P^j(c_i,t)) (E^j(C,t)-\bar{E}^j(\{c_i\}_{i=1}^m,t))$.
	
	For a given set of $m$ cells the entropy gain is shown in Eq. \ref{eq:gain}.
	
	\begin{equation}
	\label{eq:gain}
	G^j({\{c_i\}_{i=1}^m},t) = E^j(C,t) - \prod \limits_{\{c_i\}_{i=1}^m} (1-p_j\times P^j(c_i,t))\bar{E}^j(\{c_i\}_{i=1}^m,t)
	\end{equation}
	
	We want to maximize the sum of the entropy gain for all targets $a_j \in A$. Hence, we want to find a subset of $m$ cells that maximize Eq. \ref{eq:maxGain}.
	
	\begin{equation}\label{eq:maxGain}
	\begin{split}
	& \argmax\limits_{\{c_i\}_{i=1}^m \in C}\sum\limits_{\{a_j\}_{j=1}^n} G^j({\{c_i\}_{i=1}^m},t) \\
	& = \argmax\limits_{\{c_i\}_{i=1}^m \in C}\sum\limits_{\{a_j\}_{j=1}^n}[ E^j(C,t) 
	- \prod \limits_{\{c_i\}_{i=1}^m} (1-p_j\times P^j(c_i,t))\bar{E}^j(\{c_i\}_{i=1}^m,t) ]\\
	& = \argmin\limits_{\{c_i\}_{i=1}^m \in C} \sum\limits_{\{a_j\}_{j=1}^n} \prod \limits_{\{c_i\}_{i=1}^m} (1-p_j\times P^j(c_i,t))\bar{E}^j(\{c_i\}_{i=1}^m,t)
	\end{split}
	\end{equation}
	
	Entropy is a submodular function \cite{frank1993submodular,madiman2008entropy}, meaning that we can use an online greedy algorithm for a near-optimal solution \cite{feige2006approximation} for determining the best assignments of $m$ UAVs to $m$ cells (without checking all $N\choose m$ combinations of cells).
	
	\begin{restatable}{lem}{maxlemma}
    \label{lem:max}
    When $p = 1$ then the maximum gain (Eq. \ref{eq:maxGain}) returns the cell with the highest probability of being the target's location
	(proof in Appendix).
    \end{restatable}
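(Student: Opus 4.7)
The plan is to substitute $p=1$ (hence $p_j=1$ for every target) into Equation~\ref{eq:maxGain} and reduce the objective to a closed form in each candidate cell's probability. Fix a target $a_j$ and a single candidate cell $c$, and write $p_c := P^j(c,t)$. Algorithm~\ref{alg:newP} with $p_j=1$ yields normalizer $\eta=1-p_c$, zero posterior mass on $c$, and $P^j_{\mathrm{temp}}(c_k,t) = P^j(c_k,t)/(1-p_c)$ for $c_k \neq c$. Combining $\log(a/b)=\log a-\log b$ with $\sum_{c_k\neq c}P^j(c_k,t)=1-p_c$ and the definition of $E^j(C,t)$, a short manipulation collapses the updated entropy into
\begin{equation*}
(1-p_c)\,\bar{E}^j(\{c\},t) \;=\; E^j(C,t) \;-\; H_2(p_c),
\end{equation*}
where $H_2(p):=-p\log p-(1-p)\log(1-p)$ is the binary entropy. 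Plugging this into Equation~\ref{eq:gain} makes the per-target gain collapse to $G^j(\{c\},t) = H_2(p_c)$, so the optimization in Equation~\ref{eq:maxGain} reduces to $\argmax_{c\in C} H_2(P^j(c,t))$.

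Next I would show this argmax is realized at the highest-probability cell. Let $c^\star$ realize $p^\star := \max_c P^j(c,t)$, and recall that $H_2$ is strictly increasing on $[0,1/2]$, strictly decreasing on $[1/2,1]$, and symmetric about $1/2$. If $p^\star \le 1/2$, every other $P^j(c,t)$ lies in $[0,p^\star]\subseteq[0,1/2]$, so $H_2(P^j(c,t)) \le H_2(p^\star)$ by monotonicity. If $p^\star > 1/2$, the simplex constraint $\sum_c P^j(c,t)=1$ forces every other $P^j(c,t) \le 1-p^\star < 1/2$, and then $H_2(P^j(c,t)) \le H_2(1-p^\star) = H_2(p^\star)$ by monotonicity and symmetry. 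In both cases $c^\star$ maximizes the gain.

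The step I expect to be the main obstacle is the $p^\star>1/2$ regime: $H_2$ is decreasing past $1/2$, so at first glance a cell with moderate probability could appear to outperform the most-likely one. The resolution is to combine the simplex constraint with the symmetry $H_2(p)=H_2(1-p)$, which I would highlight explicitly. The boundary values $p_c\in\{0,1\}$ are absorbed by the convention $0\log 0=0$; at $p_c=1$ the no-detection event has probability zero and Algorithm~\ref{alg:newP} is applied only formally, but $G^j(\{c\},t)=H_2(p_c)$ remains consistent with $H_2(1)=0$ in that limit.
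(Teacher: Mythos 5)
Your proof is correct and follows essentially the same route as the paper's: both reduce the single-cell gain to the binary entropy $H_b(P^j(c,t))$ of the candidate cell's probability and then conclude via monotonicity of $H_b$ on $[0,\frac{1}{2}]$ together with its symmetry about $\frac{1}{2}$ and the simplex constraint $\sum_c P^j(c,t)=1$. Your closed form $G^j(\{c\},t)=H_2(P^j(c,t))$ simply makes explicit what the paper obtains through the pairwise cancellations in showing $G^j_1>G^j_i$.
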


    If $p < 1$, the maximum entropy gain is not necessarily the maximum cell probability. For example, consider the simple case of one UAV and one target, and two cells $C = \{c_1,c_2\}$, with a probability of $P^1(c_1,t) = 0.9$ and $P^1(c_2,t) = 0.1$ that there is a target located in that cell at time $t$ and $p = 0.9$. In this case, according to Eq. \ref{eq:maxGain} the entropy gain from sending a UAV to $c_1$ is $0.28$ and to $c_2$ is $0.39$. Using {\em only} the entropy measurement would reduce the uncertainy and assign this UAV to $c_2$ although the {\em probability} of having a target there is only $0.1$. 
    However, we are not interested only in reducing the entropy, but in detecting the target. Therefore, we propose an approach that combines maximum gain and maximum probability. We will assign each target a UAV to search the cell that maximizes its detection probability, and all other $m-n$ UAVs will search according to the maximum gain (see Alg. \ref{alg:assi}).

	\begin{algorithm} [h]
		\caption{General Cell Search Assignment}\label{alg:assi}
		\begin{algorithmic}
		\STATE $A = \{ \}$
		\FORALL{$1 \leq j \leq n$}
		    \STATE $A = A \bigcup \argmax\limits_{c_i\in C} P^j(c_i,t)$
		\ENDFOR
		\IF {$|A| \geq m$}
		    \RETURN $m$ cells with max probability
		\ELSE
		    \STATE $k = m-|A|$
			\RETURN $A \bigcup \argmax\limits_{\{c_i\}_{i=1}^k \in C \setminus A}\sum\limits_{\{a_j\}_{j=1}^n} G^j({\{c_i\}_{i=1}^k},t)$
		\ENDIF
		\end{algorithmic}
	\end{algorithm}
	
	When all of the targets enter from the same location, we can look at a light version of this problem. We use only one probabilistic belief model for all of the targets. In all extensive simulation, we found that the balance between choosing cells according to maximum probability and the entropy works better using a threshold, see Alg. \ref{alg:one_enter}. More details on how to choose the threshold $Th^p$ are presented in Sec. \ref{sec:exp}. 
	
	\begin{algorithm} [h]
		\caption{Single Initial Location - Cell Search Assignment}\label{alg:one_enter}
		\begin{algorithmic}
		\IF {$\exists c_j \in C$ such that $P(c_j,t)\geq Th^p$}
			\RETURN $c_j \bigcup G(C \setminus c_j,t)$
		\ELSE
			\RETURN $G(C,t)$
		\ENDIF
		\end{algorithmic}
	\end{algorithm}
	
	\section {Experimental evaluation} \label{sec:exp}
	
	We initially assume that the targets want to arrive at their selected goals as fast as possible before they are detected. Hence, we assume to know their strategy (shortest path). We used an offline process to create a noisy movement model of the targets as follows. We generated a path from each possible initial location edge to any potential goal edge, and created a stochastic Markovian movement model from each edge to its neighbors. Although the strategy is known, there is uncertainty due to the detection probability of the UAVs, the random goal selection of the target, and the low accuracy added from the offline process when computing the probability of all possible locations (initial target location and goal location options).
	
	To evaluate our algorithm we initially used the Simax Smart scenario Generator (SSG)\footnote{\url{www.hartech.co.il}}, a graphical online real-time simulation, which simulates a border protection scenario. This simulation supports multiple homogeneous targets. In this simulation, the UAVs have three goals to protect ($|O|=3$), with a total of $105,000$ roads (edges) in $480$ cells. The velocity of the UAVs is $100 km/h$, and the targets' velocity is $10 km/h$. The targets' initial location is chosen at random with uniform distribution.
    The simulation could not operate in parallel, and the simulations needed to start each time manually. For that reason, we focused on testing our algorithm (Alg. \ref{alg:one_enter}) on the scenario in which all targets enter from the same location. 
    
	We focused on a challenging setting where the UAVs are located {\em far away from the location on the border} when the targets enter. This allows the targets to progress towards their goal before the UAVs begin the search.
	All of the results presented below are the average of at least 40 simulation runs.
	
	Fig. \ref{fig:scout}(a) presents the $Th^p$ that maximizes the success of detecting the target as the UAVs' detection probability changes. As seen, the lower the detection probability is, the higher the chosen threshold is. Meaning that as the uncertainty of the UAVs increases, the more impact the entropy measurement will have on the selection of the cells (rather than moving to the cell with the highest probability). Fig. \ref{fig:scout}(b) shows the influence of the number of UAVs on the detection performance. We considered one, two and three targets (the initial location was chosen randomly with uniform distribution). As the number of UAVs increases, the success rate of detecting the targets increases. In particular, when the number of UAVs is at least the number of attackers +1, then the success rate is perfect. 
    We have compared our solution as presented in Alg. \ref{alg:one_enter} to two variations of our algorithms.
    The first is {\em {Max prob}} that aims at maximizing the probability of immediately locating the targets ($\argmax\limits_{\{c_i\}_{i=1}^k \in C} P(c_i,t)$). The second is the {\em {Entropy}} gain that aims at minimizing the uncertainty by maximizing the entropy gain (see Eq. \ref{eq:maxGain}).
    As seen from Fig. \ref{fig:scout}(c), our solution success rate with one target and one UAV is higher compared to all algorithms (statistically significant using the ANOVA test, with $p-value < 0.05$). 
    
	\begin{figure}[h]
    	\centering
    	\includegraphics[width=1.20\columnwidth,keepaspectratio]{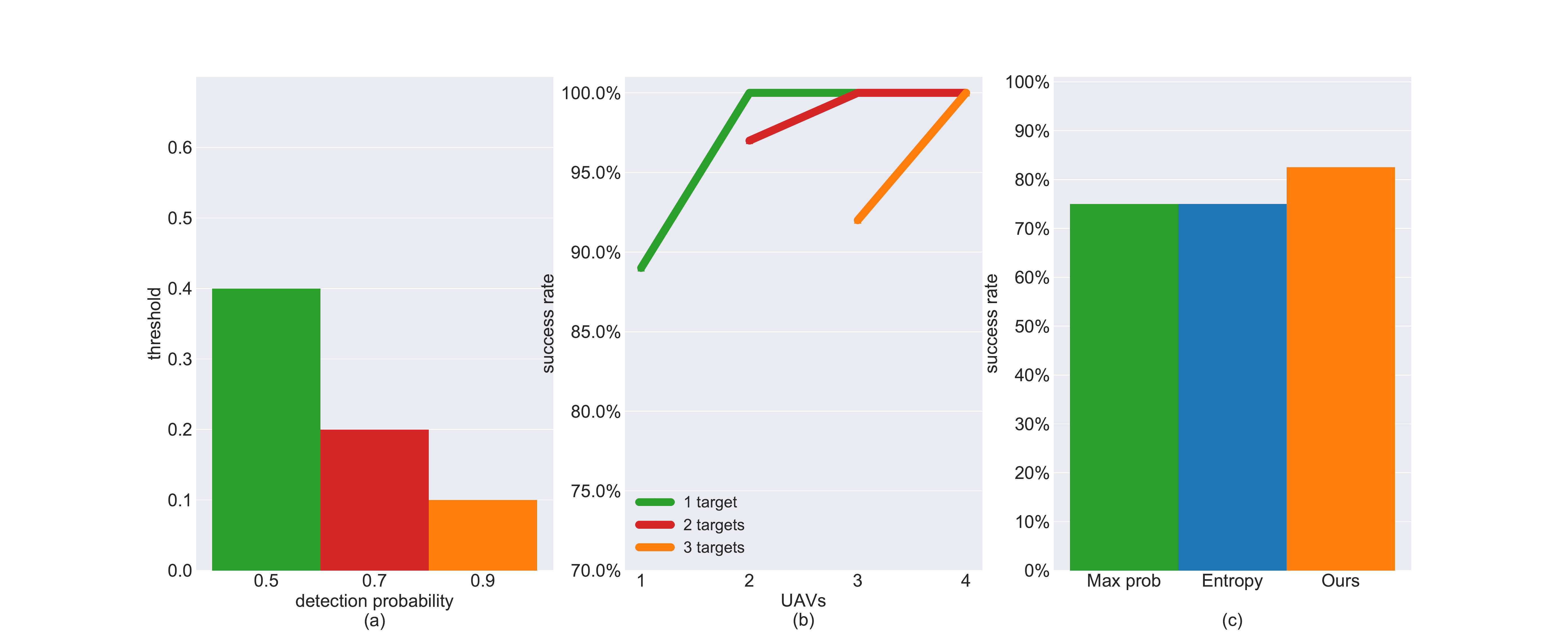}
    	\caption{Simax Smart scenario Generator. \newline 
    	(a): present the threshold $Th^p$ that maximizes the detection probability. \newline
    	(b): presents the success rate of one to three targets and one to four UAVs with $Th^p=0.2$ and detection probability $0.7$. \newline
    	(c): presents the success rate comparison of one target and one UAV with $Th^p=0.2$ and detection probability $0.7$.}
    \end{figure}\label{fig:scout}

	In order to test the algorithm in extensive simulation, we built a border protection simulation of our own\footnote{\url{https://github.com/MorSinay/OMEGA.git}}. In our simulation, we assume that we have the nested roadmap mapped as a graph. The UAVs have seven goals to protect ($|O|=7$) on a total of $7,192$ roads in $480$ cells where $10$ of them are border cells (see Fig. \ref{fig:bird_eye}). The UAVs' velocity is $30 km/h$, and the targets travel at a random velocity between $8-12 km/h$. The targets' initial location and goal selection is chosen randomly. The targets need to travel $12 km$ on average before reaching a goal. We {\em delay} the UAVs from starting to search for the targets until the targets progress at least $6 km$ towards the goal. 
    All of the results presented below are the average of at least 80 simulation runs.
    
	We have compared our solution to two variations of our algorithms.
    The first is {\em {Max avg prob}} that aims at maximizing the average probability of immediately locating the targets ($\argmax\limits_{\{c_i\}_{i=1}^k \in C} \frac{1}{n}\sum\limits_{j=1}^n P^j(c_i,t)$). The second is the {\em {Entropy}} gain that aims at minimizing the uncertainty by maximizing the entropy gain (see Eq. \ref{eq:maxGain}).

   In Fig. \ref{fig:shortest_path}(a) we compare the performance of our solution with two variations of the algorithm using three UAVs and a changing number of targets from two to four with a delay of approximately $7 km$.  The clear advantage of our solution is when there are more UAVs than targets because we gain from both maximizing the detection probability and minimizing the uncertainty. In Fig. \ref{fig:shortest_path}(b) we compare the success rate of our solution on a fixed number of targets (three) with an increasing number of UAVs. The success rate increases as the number of UAVs increases. Fig. \ref{fig:shortest_path}(c) presents the decreasing success rate on a fixed number of UAVs (three) with an increasing number of targets. As we increase the number of targets, it becomes harder for the UAVs to detect all of them before one reaches its goal. 
    
    \begin{figure}[h]
    	\centering
    	\includegraphics[width=1.20\columnwidth,keepaspectratio]{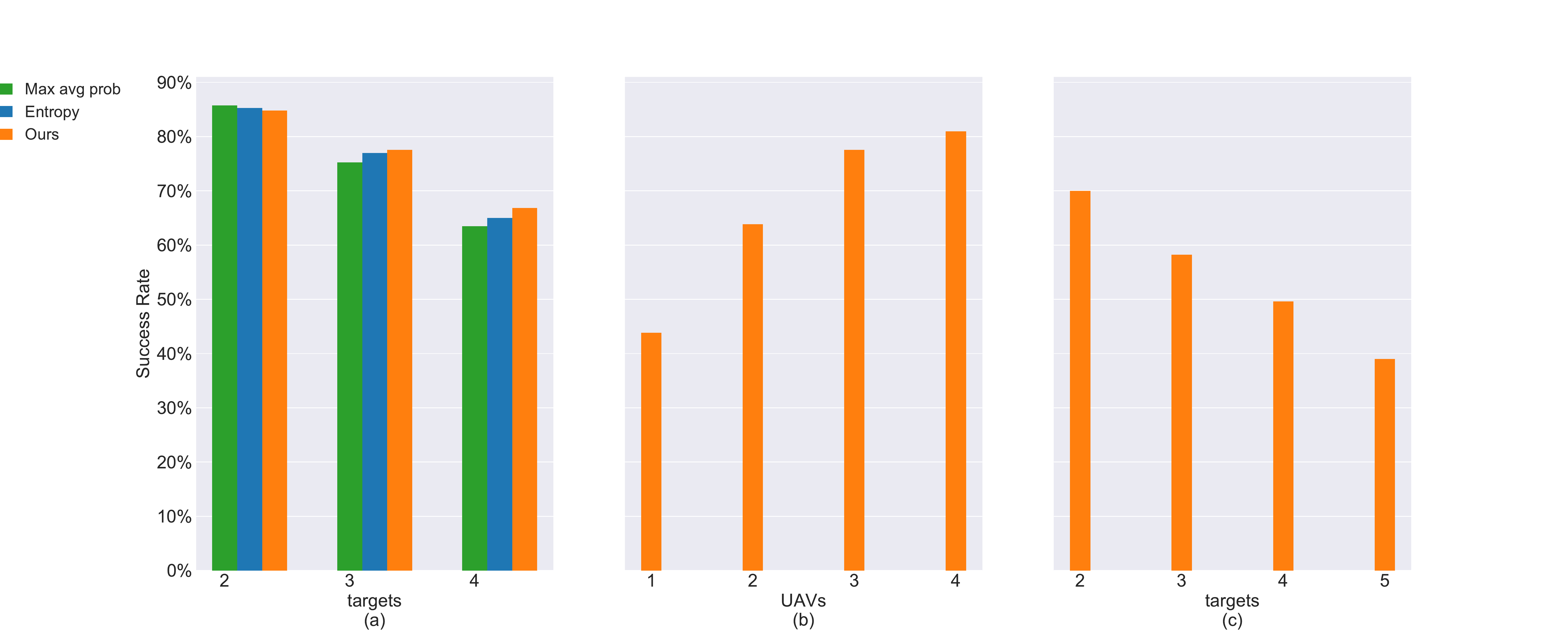}
    	\caption{Success rate of different algorithms on a known target strategy. \newline 
    	(a): presents three UAVs and various targets (delay $7.5 km$). \newline
    	(b): presents three targets and increasing number of UAVs (delay $7 km$). \newline
    	(c): presents the success rate of three UAVs and a multiple number of targets (delay $8 km$).}
    \end{figure}\label{fig:shortest_path}

    In order to test the performance of different targets with different movement models, we used forty PDAs, which are computer agents to simulate the targets. These agents were developed by Computer Science Master's and Ph.D. students as part of a graded exercise of the Advanced Artificial Intelligence course. We gave the PDAs the road map (graph), the initial location and the goals. The students had to create an agent which would generate a path for the target when given an initial location. Some agents, for example, generated the shortest path between the initial location and random goal selection, others used some form of random walk, some moved on side roads. We used thirty different agents (picked randomly) to create our offline movement model and tested the performance on ten agents that were never seen before. For each agent, we produced different paths by generating from every initial location. In those scenarios, there is much more uncertainty in the movement model. Hence, there is a need to give more weight to uncertainty reduction. We minimized the uncertainty (Eq. \ref{eq:maxGain}) in case where there are more undetected targets than UAVs. Otherwise, we used Alg. \ref{alg:assi} which maximizes the detection probability while minimizing uncertainty.
    
	\begin{figure}[t]
		\centering
		\includegraphics[width=1.20\columnwidth,keepaspectratio]{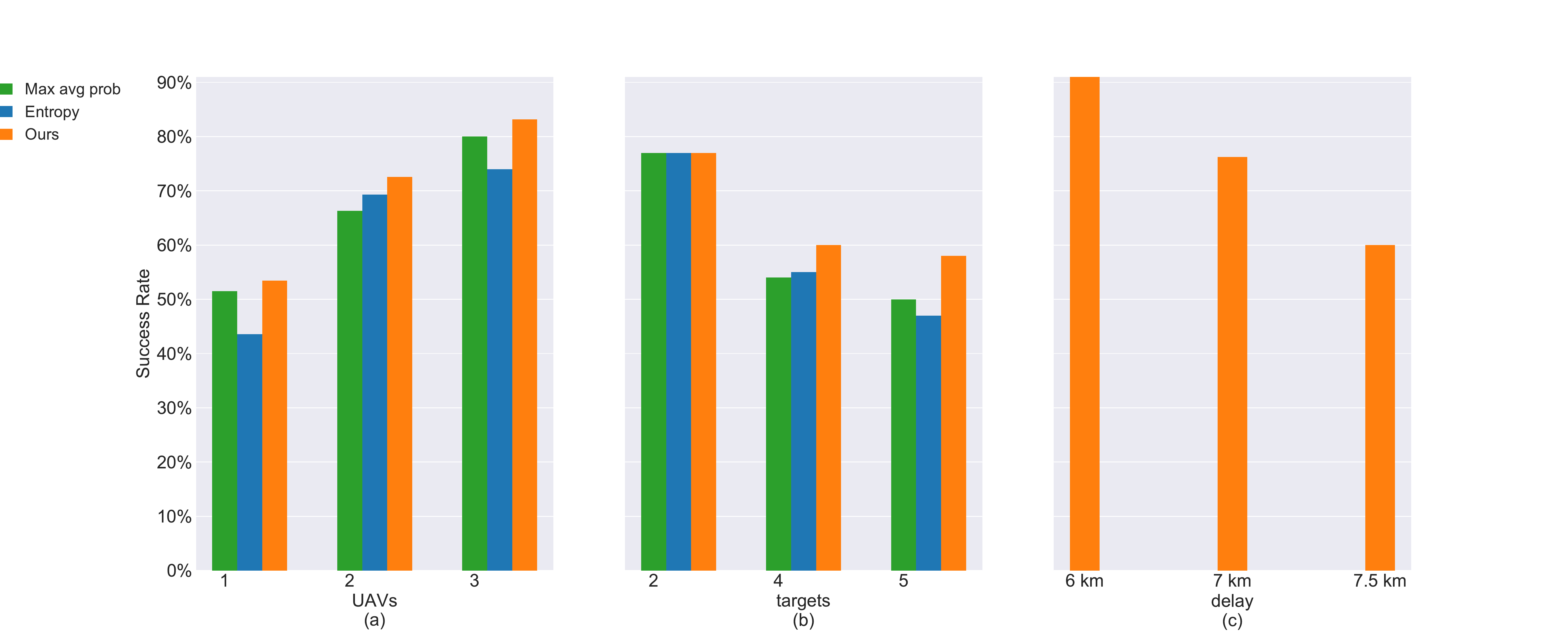}
		\caption{Success rate of different algorithms on an unknown target strategy. \newline 
    	(a): presents three targets and an increasing number of UAVs (delay $7 km$). \newline
    	(b): presents three UAVs and various targets (delay $7.5 km$). \newline
    	(c): presents the success rate of three UAVs and four targets with a different starting delay.}
	\end{figure}\label{fig:students}
	
    In Fig. \ref{fig:students}(a) we compare the performance of our solution with a fixed number of targets and an increasing number of UAVs. Note that even though there is one UAV on three targets, our solution and Entropy have different results due to the algorithm switch when the number of UAVs is equal to the number of undetected targets (statistically significant using the ANOVA test, with $p-value < 0.1$). 
    In Fig. \ref{fig:students}(b) we compare the success rate of our solution using three UAVs with an increasing number of targets (four and five targets are statistically significant with $p-value < 0.05$). In both Fig. \ref{fig:students}(a) and  \ref{fig:students}(b), we benefit from the combination of reducing the uncertainty with the maximum probability. 
    Fig. \ref{fig:students}(c) presents the success rate of a fixed number of targets and UAVs and a growing number of delays. As expected, increasing the delay would decrease the detection performance.

	\section{Conclusion} \label{sec:conclusion}
	In this paper, we introduced the problem of searching targets that wish to arrive at a specific goal under uncertainty (OMEGA). We presented an algorithmic infrastructure for efficiently solving OMEGA and MTS problems for a general graph nested in a 2D environment, assuming imperfect detection of the UAVs on multiple targets with different initial locations and different movement strategies. We have shown an entropy-based algorithm that also refers to the maximum likelihood of finding a target. Our suggested algorithmic solution was tested in a realistic simulation, showing the real-time efficient performance of our framework, also compared to other solutions.
	
\begin{acknowledgements}
This research was supported in part by ISF grant \#1337/15 and by a grant from MOST Israel and the JST Japan.
\end{acknowledgements}

\newpage
\bibliography{bibliography}
\bibliographystyle{spmpsci}

\newpage
\begin{appendices}
\appendix

\section{Proof of lemma~\ref{lem:max}} 
\maxlemma*

\begin{proof}
Assume that for target $a_j$ there are three cells with probability $p_1 = P^j(c_1,t), p_2 = P^j(c_2,t)$ and $p_3 = P^j(c_3,t)$ where $p_3 = 1-p_1-p_2$ and $p_1>p_2>p_3$.
	Denote the current entropy as $E^j$.
	Let's compute the gain from sending a UAV to each cell.

\begin {enumerate}
	\item The new probability belief after selecting to search in cell $c_1$ is:\newline
    	$P^j(c_1, t+1) = 0, P^j(c_2,t+1) = \frac{p_2}{1-p_1}$ and $P^j(c_3,t+1)=\frac{p_3}{1-p_1}$. 
    	\newline The new entropy is:
    	\begin{align*}
        	\bar{E}^j_1 = -\frac{p_2}{1-p_1}log(\frac{p_2}{1-p_1})-\frac{p_3}{1-p_1}log(\frac{p_3}{1-p_1})
    	\end{align*}
    	\newline The gain is:
    	\begin{align*}
        	G^j_1 & = E^j - (1-p_1)\times \bar{E}^j_1 \\
        	& = E^j - (-p_2 log(\frac{p_2}{1-p_1})-p_3 log(\frac{p_3}{1-p_1}))
    	\end{align*}
	
	\item The new probability belief after selecting to search in cell $c_2$ is:\newline
    	$P^j(c_1, t+1) = \frac{p_1}{1-p_2}, P^j(c_2,t+1) = 0$ and $P^j(c_3,t+1)=\frac{p_3}{1-p_2}$. 
    	\newline The new entropy is:
    	\begin{align*}
        	\bar{E}^j_2 = -\frac{p_1}{1-p_2}log(\frac{p_1}{1-p_2})-\frac{p_3}{1-p_2}log(\frac{p_3}{1-p_2})
    	\end{align*}
    	\newline The gain is:
    	\begin{align*}
        	G^j_2 & = E^j - (1-p_2)\times \bar{E}^j_2 \\
        		& = E^j - (-p_1 log(\frac{p_1}{1-p_2})-p_3 log(\frac{p_3}{1-p_2}))
    	\end{align*}
	
	\item The new probability belief after selecting to search in cell $c_3$ is:\newline
    	$P^j(c_1, t+1) = \frac{p_1}{1-p_3}, P^j(c_2,t+1) = \frac{p_2}{1-p_3}$ and $P^j(c_3,t+1) = 0$. 
    	\newline The new entropy is:
    	\begin{align*}
        	\bar{E}^j_3 = -\frac{p_1}{1-p_3}log(\frac{p_1}{1-p_3})-\frac{p_2}{1-p_3}log(\frac{p_2}{1-p_3})
    	\end{align*}
    	\newline The gain is:
    	\begin{align*}
        	G^j_3 & = E^j - (1-p_3)\times \bar{E}^j_3 \\
        		& = E^j - (-p_1 log(\frac{p_1}{1-p_3})-p_2 log(\frac{p_2}{1-p_3}))
    	\end{align*}
	\end{enumerate}
	
	We want to find the cell that maximizes the gain. Hence, we want to show that $G^j_1>G^j_2$ and $G^j_1>G^j_3$ (because we assumed $p_1>p_2>p_3$).
	Let's start by showing that  $G^j_1>G^j_2$.
	\begin{align*}
	& E^j - (-p_2 log(\frac{p_2}{1-p_1})-p_3 log(\frac{p_3}{1-p_1})) > \\
	& E^j - (-p_1 log(\frac{p_1}{1-p_2})-p_3 log(\frac{p_3}{1-p_2})) \\
	\iff \\
	& p_2\times (log(p_2)-log(1-p_1)) + p_3\times (log(p_3)-log(1-p_1)) > \\
	& p_1\times (log(p_1)-log(1-p_2)) + p_3\times (log(p_3)-log(1-p_2)) \\
	\iff 
	& p_2\times (log(p_2)-log(1-p_1)) - p_3\times log(1-p_1) > \\ 
	& p_1\times (log(p_1)-log(1-p_2)) - p_3\times log(1-p_2) \\
	\iff \\
	& p_2\times (log(p_2)-log(1-p_1)) - (1-p_1-p_2)\times log(1-p_1) > \\
	& p_1\times (log(p_1)-log(1-p_2)) - (1-p_1-p_2)\times log(1-p_2) \\	
	\iff 
	& p_2\times log(p_2) - (1-p_1)\times log(1-p_1) > \\
	& p_1\times log(p_1) - (1-p_2)\times log(1-p_2) \\
	\iff 
	& p_2\times log(p_2) + (1-p_2)\times log(1-p_2) > \\
	& p_1\times log(p_1) + (1-p_1)\times log(1-p_1) \\
	\iff 
	& -p_2\times log(p_2) - (1-p_2)\times log(1-p_2) < \\
	& -p_1\times log(p_1) - (1-p_1)\times log(1-p_1) \\
	\iff 
	& H_b(p_2) < H_b(p_1) \text{ where }H_b(x) = -xlog(x)-(1-x)log(1-x)\\ 
	\end{align*}

	\begin{figure}[H]
		\centerline{\includegraphics[width=6cm,height=6cm,keepaspectratio]{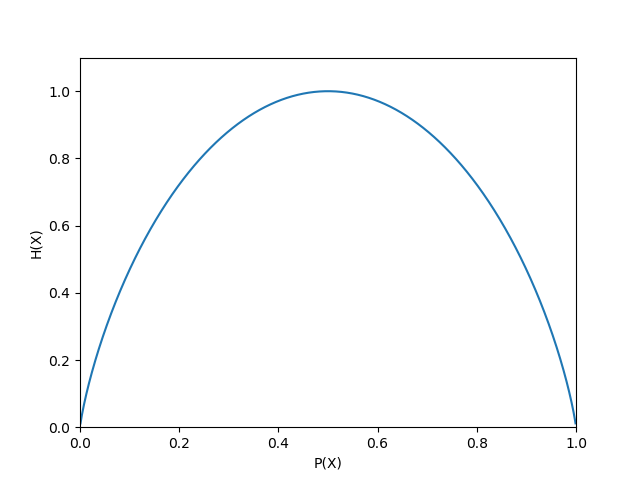}}
		\caption[Binary Entropy]{Binary Entropy ($H_b$)}
		\label{bEntropy}
	\end{figure}
	
	$H_b$ is the binary entropy (see Fig. \ref{bEntropy}). Hence, if $p_2<p_1<\frac{1}{2}$ then $H_b(p_2) < H_b(p_1)$ and the equation holds.
	Let's check in case $p_1>\frac{1}{2}$ and $p_2<\frac{1}{2}$:
	Notice that $H_b(x)$ is an even function around $\frac{1}{2}$. Hence $H_b(\frac{1}{2} - x) = H_b(\frac{1}{2} + x)$.
	We can write $p_1$ as $p_1 = \frac{1}{2} + x$ and $p_2$ as $p_2 = \frac{1}{2} - y$.
	We know that $p_1+p_2<1 \rightarrow \frac{1}{2} + x + \frac{1}{2} - y < 1 \iff x<y$.
	
	We want to show that $H_b(p_2) < H_b(p_1) \iff H_b(\frac{1}{2} - y) < H_b(\frac{1}{2} + x) = H_b(\frac{1}{2} - x)$ so we need to show that $H_b(\frac{1}{2} - y) < H_b(\frac{1}{2} - x) \iff \frac{1}{2} - y < \frac{1}{2} - x \iff -y < -x \iff y > x$ and we know that this holds.
	
    Similar to that we can show that $G^j_1>G^j_3$.
	
	We want to elaborate this for $n$ cells with probability. Assume $n>3$ cells with non-zero probability $p_1>p_2>..>p_n = 1-p_1-p_2-...-p_{n-1}$.
	
	\begin {enumerate}
	\item The new probability belief after selecting to search in cell $c_1$ is:\newline
	$P^j(c_1, t+1) = 0$ and $\forall 1< i\leq n, P^j(c_i,t+1)=\frac{p_i}{1-p_1}$. 
	\newline The new entropy is:
	\begin{align*}
	\bar{E}^j_1 & = -\frac{p_2}{1-p_1}log(\frac{p_2}{1-p_1})- \\ 
		& \frac{p_3}{1-p_1}log(\frac{p_3}{1-p_1}) - .. -\frac{p_n}{1-p_1}log(\frac{p_n}{1-p_1})
	\end{align*}
	\newline The gain is:
	\begin{align*}
	    G^j_1 & = E^j - (1-p_1)\times \bar{E}^j_1 \\
		& = E^j-(-p_2 log(\frac{p_2}{1-p_1})-p_3 log(\frac{p_3}{1-p_1}) - .. -p_n log(\frac{p_n}{1-p_1}))
	\end{align*}
	
	\item The new probability belief after selecting to search in cell $c_2$ is:\newline
	$P^j(c_2, t+1) = 0$ and $\forall 1\leq i\leq n, i \neq 2, P^j(c_i,t+1)=\frac{p_i}{1-p_2}$. 
	\newline The new entropy is:
	\begin{align*}
    \bar{E}^j_2 & = -\frac{p_1}{1-p_2}log(\frac{p_1}{1-p_2})- \\ 
		& \frac{p_3}{1-p_2}log(\frac{p_3}{1-p_2}) - .. -\frac{p_n}{1-p_2}log(\frac{p_n}{1-p_2})
	\end{align*}
	\newline The gain is:
	\begin{align*}
	   G^j_2 & = E^j - (1-p_2)\times \bar{E}^j_2 \\
		& = E^j-(-p_1 log(\frac{p_1}{1-p_2})-p_3 log(\frac{p_3}{1-p_2}) - .. -p_n log(\frac{p_n}{1-p_2}))
	\end{align*}
	\end{enumerate}
	
	We want to find the cell that maximizes the gain. Hence, we want to show that $\forall 1 < i \leq n, G^j_1>G^j_i$.
	Let's start by showing that  $G^j_1>G^j_2$.
	
	\begin{align*}
		& E^j-(-p_2 log(\frac{p_2}{1-p_1})-p_3 log(\frac{p_3}{1-p_1}) - .. -p_n log(\frac{p_n}{1-p_1})) > \\
		& = E^j-(-p_1 log(\frac{p_1}{1-p_2})-p_3 log(\frac{p_3}{1-p_2}) - .. -p_n log(\frac{p_n}{1-p_2})) \\
		\iff 
		& p_2 log(\frac{p_2}{1-p_1})+p_3 log(\frac{p_3}{1-p_1}) + .. +p_n log(\frac{p_n}{1-p_1}) > \\
		& p_1 log(\frac{p_1}{1-p_2})+p_3 log(\frac{p_3}{1-p_2}) + .. +p_n log(\frac{p_n}{1-p_2})\\
		\iff 
		& p_2 (log(p_2)-log(1-p_1)) + p_3 (log(p_3)-log(1-p_1)) + \\ &.. + p_n (log(p_n)-log(1-p_1)) >\\
		& p_1 (log(p_1)-log(1-p_2)) + p_3 (log(p_3)-log(1-p_2)) + \\ &.. + p_n (log(p_n)-log(1-p_2))\\
		\iff \\
		& p_2 (log(p_2)-log(1-p_1)) - p_3 log(1-p_1) - .. - p_n log(1-p_1) >\\
		& p_1 (log(p_1)-log(1-p_2)) - p_3 log(1-p_2) - .. - p_n log(1-p_2)\\
		\iff 
		& p_2 (log(p_2)-log(1-p_1)) - p_3 log(1-p_1) - \\ &.. - (1-p_1-p_2-..-p_{n-1}) log(1-p_1) > \\
		& p_1 (log(p_1)-log(1-p_2)) - p_3 log(1-p_2) - \\ &.. - (1-p_1-p_2-..-p_{n-1}) log(1-p_2)\\
		\iff 
		& p_2 log(p_2) - (1-p_1) log(1-p_1) > \\
		& p_1 log(p_1) - (1-p_2) log(1-p_2) \\
		\iff 
		& p_2 log(p_2) + (1-p_2) log(1-p_2) >\\
		& p_1 log(p_1) + (1-p_1) log(1-p_1)\\
		\iff 
		& -p_2 log(p_2) - (1-p_2) log(1-p_2) <\\
		& -p_1 log(p_1) - (1-p_1) log(1-p_1)\\
		\iff 
		& H_b(p_2) < H_b(p_1)
	\end{align*}
		
	This is the same as before.	
\end{proof}
\end{appendices}

\end{document}